\DeclareMathOperator*{\argmax}{arg\,max}
\begin{document}

\title{Towards Bayesian Data Selection}

\author{\name Julian Rodemann \email j.rodemann@lmu.de \\
       \addr Department of Statistics, LMU Munich\\
  }


\maketitle

\begin{abstract}

A wide range of machine learning algorithms iteratively add data to the training sample. Examples include semi-supervised learning, active learning, multi-armed bandits, and Bayesian optimization. We embed this kind of data addition into decision theory by framing data selection as a decision problem. This paves the way for finding Bayes-optimal selections of data. 
For the illustrative case of self-training in semi-supervised learning, we derive the respective Bayes criterion. We further show that deploying this criterion mitigates the issue of confirmation bias by empirically assessing our method for generalized linear models, semi-parametric generalized additive models, and Bayesian neural networks \mbox{on} simulated and real-world data. 

\end{abstract}

\begin{keywords}
  Data Subset Selection, Decision Theory, Data Efficiency, Importance Sampling, Semi-Supervised Learning, Self-Training, Pseudo-Labeling, Distribution Shift
\end{keywords}

\section{Introduction}


The \textit{i.i.d.} assumption is arguably the most popular one in machine learning and statistics. It assumes that samples are independently drawn from the same distribution and is widely used in both statistical inference and machine learning. However, many real-world scenarios involve data with complex dependencies, distributional shifts, or non-stationary distributions. Moving \say{beyond \textit{i.i.d.}} \citep{beyondii} has thus become a broadly accepted research direction.      
Here, we consider the peculiar case where distributional shifts are induced by machine learning algorithms themselves. This happens if algorithms interact with the sample that they are iteratively trained on. In each iteration, such algorithms add new unseen data to the training data. That is, they not only learn parameters from data -- as is customary in machine learning -- but also vice versa. We refer to such methods as \textit{reciprocal learning}, see also \citep{reciprocal2024}. Examples comprise semi-supervised learning, active learning, multi-armed bandits, and Bayesian optimization. 

In this extended abstract, we showcase the potential of embedding them into decision theory by formalizing the sample enhancement procedure as a decision problem with a state space, an action space, and an integrable utility function. Leveraging classical results from Bayesian decision theory, we derive Bayes-optimal selections of data to be added to the training data in the case of self-training in semi-supervised learning. Since such a Bayesian treatment of data selection incorporates more than one model, it can be considered more robust than classical, non-Bayesian approaches. Moreover, it allows for a better understanding of the algorithm-induced distribution shift: By selecting data with regard to an integrable utility function, the latter provides us with the respective inclusion probabilities. These latter can be used for importance sampling which allows for unbiased inference in the presence of distribution shifts \citep{xie2023data,malte-debiased,rodemann2022not,james2023resampling,zadrozny2004learning}.

\section{Selecting Data is a Decision Problem}

There exist various criteria on how to select unseen data to be added to the training data. However, albeit being widely popular, most of these criteria lack rigorous derivation, let alone inferential guarantees. To address this research gap and allow for a Bayesian treatment, we incorporate reciprocal learning into decision theory. We frame the selection of unseen data as a decision problem, where the learner's parameter space represents the unknown set of states of nature, and the action space corresponds to the set of still unlabeled data points. This perspective allows us to adopt decision-theoretic approaches, particularly focusing on finding Bayes-optimal actions (i.e., the optimal selections of pseudo-labels) under common loss/utility functions.
Consider any iterative procedure, where new training instances from a set of unseen data $\mathcal{U}$ are added to the training data $\mathcal{D}$. In each iteration, the algorithm is faced with a decision problem: It has to select a subset of $\mathcal{U}$ to be added to $\mathcal{D}$.

\begin{definition}[Canonical Decision Problem]
\label{def:dec-probl}    
Define $(\mathbb{A}, \Theta, u(\cdot))$ as a decision-theoretic triple with an action space $\mathbb{A}$, an unknown set of states of nature $\Theta$ and a utility function $u : \mathbb{A} \times \Theta \to \mathbb{R}$. 

\end{definition}

In this framework, an action corresponds to the selection of an instance from the unseen data. This is in stark contrast to statistical decision theory, where statistical procedures instead of data are to be selected, see \citep{berger1985statistical} for instance. The decision for an action is guided by a utility function that depends on the concrete reciprocal learning algorithm. For instance, in Bayesian optimization, the selection is typically guided by the expected improvement \citep{mockus1978application} of the target function in the optimization problem. In the following, we illustrate this general framework with the specific case of self-training in semi-supervised learning.

\section{Example: Bayesian PLeaSe! Bayesian Pseudo-Label Selection (PLS) }\label{sec:BPLS}

In many practical classification scenarios, obtaining labeled data can be quite challenging. This has led to the emergence of semi-supervised learning (SSL), a paradigm that leverages information from unlabeled data to enhance inference derived from labeled data in a supervised learning framework. Within SSL, a commonly used and intuitive approach is known as self-training or pseudo-labeling \citep{shi2018transductive,lee2013pseudo,rizve2020defense}. The idea behind pseudo-labeling is to train an initial model using labeled data and then iteratively assign pseudo-labels to unlabeled data based on the model's predictions. The above-introduced training data 
$
    \mathcal{D}=\left\{\left(x_{i}, y_{i}\right)\right\}_{i=1}^{n} \in \left(\mathcal{X} \times \mathcal{Y}\right)^{n}
$ is labeled and the unseen data is unlabeled $  \mathcal{U}=\left\{\left(x_{i}, \mathcal{Y}\right)\right\}_{i=n+1}^{m} \in \left(\mathcal{X} \times 2^\mathcal{Y}\right)^{m-n}
$ and from the same data generation process. Let $\mathcal{X}$ be the feature space and $\mathcal{Y}$ the categorical target space, whereby unlabeled data are notationally equated with observing the full ${\cal Y}$. The aim of SSL is to learn a predictive classification function $f$ such that $f( x) = \hat y \in \mathcal{Y}$ utilizing both $\mathcal{D}$ and $\mathcal{U}$.  
As is customary in self-training, we start by fitting a model with parameter vector $\theta \in \Theta$, where $\Theta$ is closed and bounded with $\dim(\Theta) = q$, on labeled data $
    \mathcal{D}=\left\{\left(x_{i}, y_{i}\right)\right\}_{i=1}^{n} $.
Our goal is -- as usual -- to learn the conditional distribution of $p( y \mid x)$ through $\theta$ from observing features ${x} = (x_1, \dots, x_n) \in \mathcal{X}^n$, and responses ${y} = (y_1, \dots, y_n) \in \mathcal{Y}^n$ in $\mathcal{D}$. 
We define a prior probability distribution over $\theta$ as $\pi(\theta)$. 

\RestyleAlgo{ruled}
\SetKwComment{Comment}{/* }{ */}

\begin{algorithm}[H]
\caption{Self-Training (\textit{alias} Pseudo-Labeling, Self-Labeling)}
\label{alg:main}

\KwData{$\mathcal{D}, \mathcal{U}$}
\KwResult{fitted model $\hat y^*(x)$}

\While{stopping criterion not met}{
\textbf{fit} model on labeled data $\mathcal{D}$ to obtain prediction function $\hat y(x)$ \\
\For{$i \in \{1, \dots, \lvert \mathcal{U} \rvert \}$}{
\textbf{predict} $\mathcal{Y} \ni \hat y_i = \hat y(x_i)$ with $x_i$ from $\left(x_{i}, \mathcal{Y}\right)_i$ in $\mathcal{U}$ \\
\textbf{compute} some selection criterion $c(x_{i}, \hat y_i)$
\\
}
\textbf{obtain} $i^* = \argmax_i \, c(x_{i}, \hat y_i) \, $ \\ 
\textbf{add} $(x_{i^*}, \hat y_{i^*})$ to labeled data: $\mathcal{D} \leftarrow \mathcal{D} \cup (x_i, \hat y_i) $ \\
\textbf{update} $\mathcal{U} \leftarrow \mathcal{U} \setminus \left(x_{i}, \mathcal{Y}\right)_i $
}
\end{algorithm}

The process of selecting pseudo-labeled instances, known as pseudo-label selection (PLS), is guided by some \textit{ad hoc} criteria such as the predictive variance or the predicted class probabilities \citep{arazo2020pseudo,Sohn2020,shi2018transductive,lee2013pseudo,mcclosky2006effective}. The choice of those typically lacks theoretical justification. We address this by embedding PLS into decision theory. Building on definition \ref{def:dec-probl}, the action space corresponds to unlabeled data, i.e.,  $\mathbb{A}_{\mathcal{U}} = \{(z, \mathcal{Y}) \mid \exists \, i \in \{n+1, \dots, m\} : (z, \mathcal{Y}) = (x_i, \mathcal{Y})_i \in \mathcal{U}    \}$. We further define the utility of a selected data point $(z, \mathcal{Y}) = (x_i, \mathcal{Y})_i$ as the plausibility of being generated jointly with $\mathcal{D}$ by a model $M$ with states (parameters) $\theta \in \Theta$ if we include it with its predicted pseudo-label $\hat y(z) = \hat y(x_i) = \hat y_i \in \mathcal{Y}$ in $\mathcal{D} \cup (x_i, \hat{y}_i)$.

\begin{definition}[Pseudo-Label Likelihood as Utility]
\label{def:pseud-lik}
Given $\mathcal{D}$ and the prediction functional $\hat y: \mathcal{X} \to \mathcal{Y}$, we define the following utility function\begin{align*}
  u \colon \mathbb{A}_{\mathcal{U}} \times \Theta &\to \mathbb{R}\\
  ((z, \mathcal{Y}), \theta) &\mapsto u((z, \mathcal{Y}), \theta) = p(\mathcal{D} \cup (z, \hat{y}(z))\mid \theta, M),
  \end{align*}
  which is said to be the pseudo-label likelihood.
\end{definition}

It can now be shown that by using the joint likelihood as utility, the Bayes-optimal criterion for PLS is the posterior predictive of pseudo-samples and labeled data.

\begin{theorem} [Pseudo Posterior Predictive \citep{rodemann2023-bpls}]
\label{th:ppp}
In the decision problem $(\mathbb{A}_{\mathcal{U}}, \Theta, u(\cdot))$ and the pseudo-label likelihood as utility function (definition~\ref{def:pseud-lik}) and the prior updated by the posterior $\pi(\theta) = p(\theta \mid \mathcal{D})$ on $\Theta$, the Bayes criterion 
$\Phi(\cdot, \pi) \colon \mathcal{U} \to \mathbb{R}; \, a \mapsto \Phi(a, \pi) = \mathbb{E}_\pi(u(a,\theta)) $
corresponds to $p(\mathcal{D} \cup (x_i, \hat{y}_i)\mid \mathcal{D})$, which is the posterior predictive of pseudo-samples and labeled data and shall be called \textit{pseudo posterior predictive}.
\end{theorem}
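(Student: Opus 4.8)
The plan is to unwind the two definitions and recognize the resulting integral as a posterior predictive density. First I would write the Bayes criterion explicitly: by definition $\Phi(a,\pi) = \mathbb{E}_\pi(u(a,\theta))$, and since the prior has been updated to the posterior, $\pi(\theta) = p(\theta \mid \mathcal{D})$, this expectation is $\Phi(a,\pi) = \int_\Theta u(a,\theta)\, p(\theta \mid \mathcal{D})\, d\theta$ for an action $a = (x_i,\mathcal{Y})_i \in \mathbb{A}_{\mathcal{U}}$.

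Next I would substitute Definition~\ref{def:pseud-lik}, $u(a,\theta) = p(\mathcal{D} \cup (x_i,\hat y_i) \mid \theta, M)$, where $\hat y_i = \hat y(x_i)$ is fixed once $\mathcal{D}$ (hence the fitted model) is fixed. This gives $\Phi(a,\pi) = \int_\Theta p(\mathcal{D} \cup (x_i,\hat y_i) \mid \theta, M)\, p(\theta \mid \mathcal{D})\, d\theta$. The right-hand side is, by the very definition of a Bayesian posterior predictive distribution, the quantity $p(\mathcal{D} \cup (x_i,\hat y_i) \mid \mathcal{D})$ obtained by marginalizing the sampling density of the enlarged data set over the posterior on $\Theta$. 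Identifying the two expressions yields the claim; since $\mathbb{A}_{\mathcal{U}}$ is finite, the identity holds simultaneously for every action.

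The bookkeeping points that need care are: (i) the expectation must exist, which is where the hypotheses that $\Theta$ is closed and bounded and that $u$ is integrable enter — $u$ is a likelihood, hence non-negative, and a continuous model on the compact set $\Theta$ is bounded, so $\Phi(a,\pi)$ is finite and writing the expectation as an integral against the posterior density (equivalently, prior times likelihood over the normalizing constant, justified by Tonelli) is legitimate; (ii) one must state explicitly that $\hat y$ is a deterministic functional of $\mathcal{D}$, so that conditioning on $\mathcal{D}$ fixes the pseudo-label and the only remaining randomness is in $\theta$, otherwise the ``posterior predictive'' label would be ambiguous; (iii) one should note that no factorization $p(\mathcal{D} \cup (x_i,\hat y_i)\mid \mathcal{D}) = p(\mathcal{D}\mid\mathcal{D})\,p((x_i,\hat y_i)\mid\mathcal{D})$ is claimed — the theorem concerns the joint object over $\mathcal{D} \cup (x_i,\hat y_i)$, and any such split would require an extra independence assumption.

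I would expect the only genuine obstacle to be the measurability/integrability justification in point (i); everything else is a definitional unfolding. For a fully rigorous treatment I would invoke boundedness of a continuous function on the compact $\Theta$ (or dominated convergence) to license the interchange, and otherwise keep the argument at the level of rewriting $\mathbb{E}_\pi(u(a,\theta))$ as the stated integral and reading off the posterior predictive.
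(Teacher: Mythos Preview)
Your proposal is correct and follows essentially the same route as the paper: write $\Phi(a,\pi)=\int u(a,\theta)\,d\pi(\theta)$, substitute the pseudo-label likelihood from Definition~\ref{def:pseud-lik}, and identify the resulting integral against the posterior $p(\theta\mid\mathcal{D})$ as the posterior predictive $p(\mathcal{D}\cup(x_i,\hat y_i)\mid\mathcal{D})$. Your additional care about integrability on compact $\Theta$ and the determinism of $\hat y$ given $\mathcal{D}$ goes beyond what the paper spells out, but the underlying argument is the same definitional unfolding.
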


\begin{proof}
    See Appendix~\ref{app:theorem}.
\end{proof}

To make this criterion computationally feasible, approximations based on Laplace's method and the Gaussian integral are required, avoiding the need for expensive sampling-based evaluations, see \citep[chapter 3.2]{rodemann2023-bpls}. The approximate Bayes-optimal criterion becomes simple to compute: $\ell(\hat{\theta}) - \frac{1}{2}\log\lvert \mathcal{I}(\hat{\theta})\rvert$, where $\ell(\hat{\theta})$ denotes the likelihood and $\mathcal{I}(\hat{\theta})$ the Fisher-information matrix at the fitted parameter vector $\hat{\theta}$. This approximation does not rely on the assumption of \textit{i.i.d.} data, making it widely applicable to various learning tasks. 
What is more, the approximation allows for the computation of the Bayes criterion even in more involved extensions such as considering a multi-objective utility. 
In \citep{inalllikelihoods,dietrich2024}, several ways of dealing with such a multi-objective utility are discussed. For instance, an embedding into a \textit{preference system} \citep{jsa2018,jansen2023statistical,uai2023_all} is easy due to the decision-theoretic formalization presented above. 
\citep{dietrich2024} further consider the generic approach of the generalized Bayesian $\alpha$-cut updating rule \citep{cattaneo2014continuous} for credal sets of priors. Such sets of priors can not only reflect uncertainty regarding prior information but might as well represent priors near ignorance, see~\citep{mangili2015new,mangili15prior,rodemann-BO-isipta,prior-robust-BO,imprecise-BO} for instance. Results suggest that considering this additional source of uncertainty pays out, particularly when the share of labeled data is low \citep{dietrich2024}.   
In \citep{rodemann2023-bpls}, Bayesian PLS (BPLS) is empirically tested on both simulated and real-world datasets, using different models such as generalized linear, semi-parametric generalized additive models, and Bayesian neural networks. Empirical findings suggest that BPLS can effectively mitigate the confirmation bias arising from overfitting initial models during pseudo-labeling.\footnote{All results and scripts to reproduce Bayesian PLS are here: \url{https://anonymous.4open.science/r/Bayesian-pls}. For robust extensions of Bayesian PLS including $\alpha$-cut updating, see \url{https://anonymous.4open.science/r/reliable-pls-01CF}.} 
Moreover, BPLS offers flexibility in incorporating prior knowledge not only for prediction but also for selecting pseudo-labeled data. Remarkably, BPLS does not involve any hyperparameters that require tuning. Additionally, the decision-theoretic treatment of PLS naturally extends to the framework of optimistic and pessimistic superset learning \citep{hullermeier2014learning,hullermeier2019learning,rodemann2022supersetlearning} as max-max and min-max actions, respectively.

\section{Outlook: Decision Criteria Allow Importance Sampling}

As follows from Theorem \ref{th:ppp}, the Bayes criterion for data selection is a posterior predictive \textit{probability} distribution. We thus have explicit inclusion probabilities for the sample enhancement mechanism to address the issue that self-selected data is not \textit{i.i.d.}. This allows for debiasing estimators through inverse probability weighting, see \citep[Section 7]{malte-debiased} for the case of random forests. Venues for future work include deriving inclusion probabilities for reciprocal learning beyond self-training and de-biasing estimators by procedures beyond inverse probability weighting. The latter appears particularly crucial as inverse probability weighting is prone to inflating estimators' variances. Furthermore, inferential guarantees for such estimators have yet to be established.




\newpage
\impact{The herein discussed methods contribute at making data selection more robust towards a broad spectrum of uncertainties. This can help rendering machine learning more reliable and trustworthy, but may also be misused for adverserial purposes.}

\section*{Reproducibility Statement}
As mentioned in section~\ref{sec:BPLS}, all results and scripts to reproduce them can be found here: \url{https://anonymous.4open.science/r/Bayesian-pls}. For robust extensions of Bayesian PLS including $\alpha$-cut updating, see \url{https://anonymous.4open.science/r/reliable-pls-01C}.



\newpage
\appendix
\section{}
\label{app:theorem}



In this appendix we prove the following theorem from
Section~3:\\

\noindent
\textbf{Theorem} In the decision problem $(\mathbb{A}_{\mathcal{U}}, \Theta, u(\cdot))$ and the pseudo-label likelihood as utility function (definition \ref{def:pseud-lik} and the prior updated by the posterior $\pi(\theta) = p(\theta \mid \mathcal{D})$ on $\Theta$, the Bayes criterion 
$\Phi(\cdot, \pi) \colon \mathcal{U} \to \mathbb{R}; \, a \mapsto \Phi(a, \pi) = \mathbb{E}_\pi(u(a,\theta)) $
corresponds to the \textit{pseudo posterior predictive} $p(\mathcal{D} \cup (x_i, \hat{y}_i)\mid \mathcal{D})$.\\

\begin{proof}
    The definition of the expected value for integrable $u(\cdot, \cdot)$ directly delivers \[ \Phi(a, \pi) = \mathbb{E}_\pi(u(a,\theta)) = \int u(a, \theta) d \pi(\theta) = \int p(\mathcal{D} \cup (x_i, \hat{y}_i)\mid\theta) d \pi(\theta) = p(\mathcal{D} \cup (x_i, \hat{y}_i)).\]
    
 With the updated prior $\pi(\theta) = p(\theta \mid \mathcal{D})$ it follows \[ \int u(a, \theta) d \pi(\theta)= \int p(\mathcal{D} \cup (x_i, \hat{y}_i)\mid\theta) d p(\theta \mid \mathcal{D}) = p(\mathcal{D} \cup (x_i, \hat{y}_i)\mid \mathcal{D}).\]
 
\end{proof}

\newpage
\vskip 0.2in
\bibliography{sample}

\end{document}